\documentclass[runningheads]{llncs}

\usepackage{cite}
\usepackage{wrapfig}
\usepackage{bm}
\usepackage{comment}
\usepackage{color}

\usepackage{amsmath}

\usepackage{amsthm}
\renewcommand\qedsymbol{$\blacksquare$}
\usepackage{amssymb}
\usepackage{amsfonts}
\usepackage{graphicx,epstopdf}
\usepackage{mathtools}
\usepackage{times}
\usepackage{color,soul}
\usepackage{color}
\usepackage{slashbox}
\usepackage{multirow}
\usepackage{todonotes}
\usepackage[noend]{algpseudocode}
\usepackage{algorithm}
\usepackage{subfigure}
\usepackage{url}
\usepackage[export]{adjustbox}
\usepackage{array}
\usepackage{stackengine}
\usepackage{booktabs}

\newcommand\NoDo{\renewcommand{\algorithmicdo}{}}
 \newcommand\NoThen{\renewcommand{\algorithmicthen}{}}

\DeclareMathOperator*{\argmin}{\arg\!\min}
\DeclareMathOperator*{\argmax}{\arg\!\max}

\DeclarePairedDelimiter\ceil{\lceil}{\rceil}
\DeclareMathAlphabet\mathbfcal{OMS}{cmsy}{b}{n}

\begin{document}

\title{Toward Adversarial Robustness by Diversity in an Ensemble of Specialized Deep Neural Networks}
\titlerunning{Toward Adv. Robustness by Diversity in an Ensemble of Specialized Deep Networks}

\author{Mahdieh Abbasi\inst{1} \and Arezoo Rajabi\inst{2} \and Christian Gagn\'e \inst{1,3} \and Rakesh B. Bobba\inst{2}}
\authorrunning{Abbasi, Rajabi, Gagn\'e, and Bobba}
%
\institute{IID, Universit\'e Laval, Qu\'ebec, Canada \and Oregon State University, Corvallis, USA \and Mila, Canada CIFAR AI Chair}

\maketitle              

\begin{abstract}
We aim at demonstrating the influence of diversity in the ensemble of CNNs on the detection of black-box adversarial instances and hardening the generation of white-box adversarial attacks. To this end, we propose an ensemble of diverse specialized CNNs along with a simple voting mechanism. The diversity in this ensemble creates a gap between the predictive confidences of adversaries and those of clean samples, making adversaries detectable. We then analyze how diversity in such an ensemble of specialists may mitigate the risk of the black-box and white-box adversarial examples. Using MNIST and CIFAR-10, we empirically verify the ability of our ensemble to detect a large portion of well-known black-box adversarial examples, which leads to a significant reduction in the risk rate of adversaries, at the expense of a small increase in the risk rate of clean samples. Moreover, we show that the success rate of generating white-box attacks by our ensemble is remarkably decreased compared to a vanilla CNN and an ensemble of vanilla CNNs, highlighting the beneficial role of diversity in the ensemble for developing more robust models.
\end{abstract}

\vspace{-1em}\section{Introduction}

Convolutional Neural Networks (CNNs) are now a common tool in many computer vision tasks with a great potential for deployement in real-world applications. Unfortunately, CNNs are strongly vulnerable to minor and imperceptible adversarial modifications of input images a.k.a. adversarial examples or adversaries. In other words, generalization performance of CNNs can be significantly dropped in the presence of adversaries. While identifying such benign-looking adversaries from their appearance is not always possible for human observers, distinguishing them from their predictive confidences by CNNs is also challenging since these networks, as uncalibrated learning models~\cite{guo2017calibration}, misclassify them with high confidence. Therefore, the lack of robustness of CNNs to adversaries can lead to significant issues in many security-sensitive real-world applications such as self-driving cars~\cite{eykholt2017robust}.

To address this issue, one line of thought, known as \emph{adversarial training}, aims at enabling CNNs to \emph{correctly classify any type of adversarial examples} by augmenting a clean training set with a set of adversaries~\cite{madry2017towards,metzen2017detecting,goodfellow2014explaining,liao2017defense,kurakin2016adversarial}. Another line of thought is to devise detectors to discriminate adversaries from their clean counterparts by training the detectors on a set of clean samples and their adversarials ones~\cite{metzen2017detecting,feinman2017detecting,grosse2017statistical,lee2018simple}. However, the performance of these approaches, by either increasing correct classification or detecting adversaries, is highly dependent on accessing a holistic set containing various types of adversarial examples. Not only generating such a large number of adversaries is computationally expensive and impossible to be made exhaustively, but adversarial training does not necessarily grant robustness to unknown or unseen adversaries~\cite{zhang2019limitation,tramer2019adversarial}. 

\begin{figure}[t!]
    \centering
    \includegraphics[width=\textwidth, trim=0cm 8cm 0cm 3cm, clip=true]{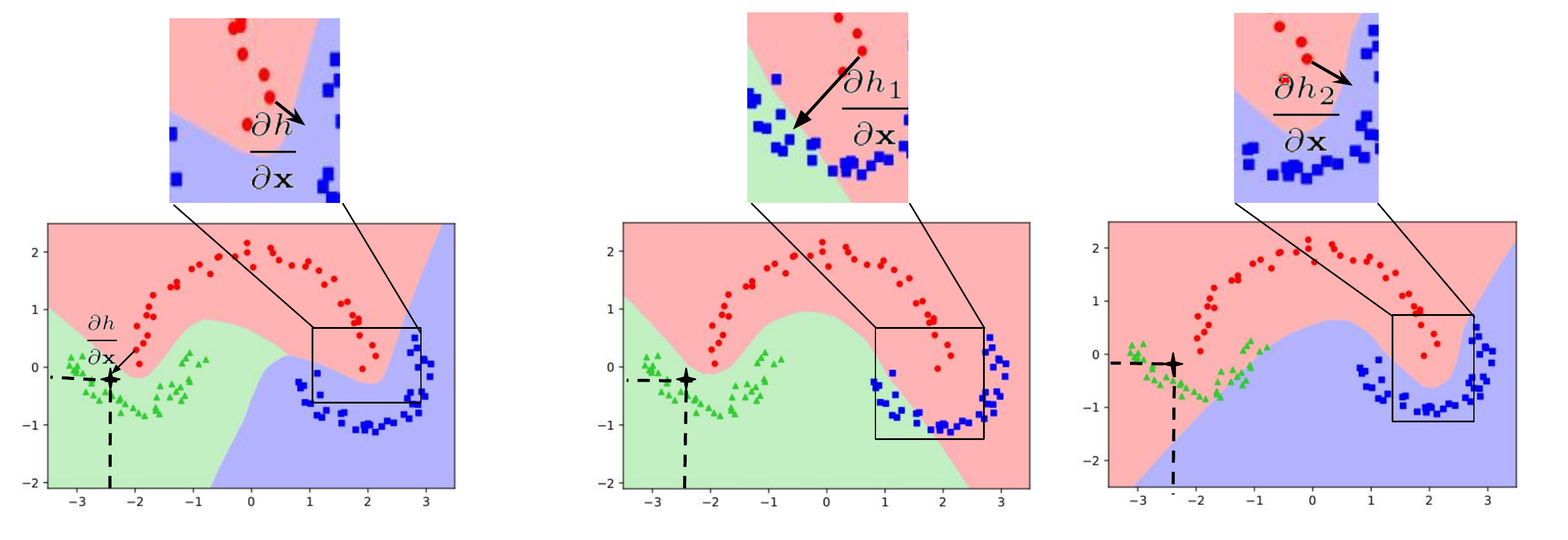}
    \vspace{-3em}
    \caption{A schematic explanation of ensemble of specialists for a 3-classes classification. On the left, a generalist ($h(.)$) trained on all 3 classes. In the middle and on the right, two specialist binary-classifiers $h_1(.)$ and $h_2(.)$ are trained on different subsets of classes, i.e. respectively (red,green) and (red, blue). A black-box attack, shown by \textbf{a black star}, which fools a generalist classifier (left), can be classified as different classes by the specialists, creating diversity in their predictions. Moreover, generation of a white-box adversarial example by the specialists can create two different fooling directions toward two unlike fooling classes. The fooling directions (in term of derivatives) are shown by black arrows in zoomed-in figures. Such different fooling directions by the specialists can harden the generation of high confidence white-box attacks (section~\ref{analysis}). \emph{Thus, by leveraging diversity in an ensemble of specialists, without the need of adversarial training, we may mitigate the risk of adversarial examples.}}
    \label{schmatic}
    \vspace{-2em}
\end{figure}
In this paper, we aim at detecting adversarial examples by predicting them with high uncertainty (low confidence) through leveraging diversity in an ensemble of CNNs, without requiring a form of adversarial training. To build a diverse ensemble, we propose forming a \emph{specialists ensemble}, where each specialist is responsible for classifying a different subset of classes. The specialists are defined so as to encourage divergent predictions in the presence of adversarial examples, while making consistent predictions for clean samples (Fig.~\ref{schmatic}). We also devise a simple voting mechanism to merge the specialists' predictions to efficiently compute the final predictions. As a result of our method, we are enforcing a gap between the predictive confidences of adversaries (i.e., low confidence predictions) and those of clean samples (i.e., high confidence predictions). By setting a threshold on the prediction confidences, we can expect to properly identify the adversaries. Interestingly, we provably show that the predictive confidence of our method in the presence of disagreement (high entropy) in the ensemble is upper-bounded by $0.5+\epsilon'$, allowing us to have a global fixed threshold (i.e., $\tau=0.5$) without requiring fine-tuning of the threshold. Moreover, we analyze our approach against the black-box and white-box attacks to demonstrate how, without adversarial training and only by diversity in the ensemble, one may design more robust CNN-based classification systems.
The contributions of our paper are as follows:
\begin{itemize}
\item We propose an ensemble of diverse specialists along with a simple and computationally efficient voting mechanism in order to predict the adversarial examples with low confidence while keeping the predictive confidence of the clean samples high, without training on any adversarial examples.
\item In the presence of high entropy (disagreement) in our ensemble, we show that the maximum predictive confidence can be upper-bounded by $0.5+\epsilon'$, allowing us to use a fixed global detection threshold of $\tau=0.5$.
\item We empirically exhibit that several types of black-box attacks can be effectively detected with our proposal due to their low predictive confidence (i.e., $\leq 0.5$). Also, we show that attack-success rate for generating white-box adversarial examples using the ensemble of specialists is considerably lower than those of a single generalist CNN and a ensemble of generalists (a.k.a pure ensemble).
\end{itemize}

\vspace{-1.5em}\section{Specialists Ensemble\label{sec:proposedmethods}}

\subsubsection{Background} For a $K$-classification problem, let us consider training set of $\{(\mathbf{x}_{i}, \mathbf{y}_i)\}_{i=1}^{N}$ with $\mathbf{x}_{i}\in\mathcal{X} $ as an input sample along with its associated ground-truth class $k$, shown by a one-hot binary vector $\mathbf{y}_{i}\in [0,1]^{K}$ with a single 1 at its $k$-th element. A CNN, denoted by $h_{\mathcal{W}}:\mathcal{X}\rightarrow [0,1]^{K}$, maps a given input to its conditional probabilities over $K$ classes. The classifier $h_\mathcal{W}(\cdot)$\footnote{For convenience, $\mathcal{W}$ is dropped from $h_\mathcal{W}(\cdot)$.} is commonly trained through a cross-entropy loss function minimization as follows:
\begin{equation}
\min_{\mathcal{W}}\frac{1}{N}\sum_{i=1}^{N}\mathcal{L}(h(\mathbf{x}_i), \mathbf{y}_i; \mathcal{W}) = -\frac{1}{N}\sum_{i=1}^{N} \log h_{k^*}(\mathbf{x}_{i}),
\label{loss-func}
\end{equation}
where $h_{k^*}(\mathbf{x}_{i})$ indicates the estimated probability of class $k^*$ corresponding to the true class of given sample $\mathbf{x}_i$. At the inference time, the threshold-based approaches like our approach define a threshold $\tau$ in order to reject the instances with lower predictive confidence than $\tau$  as an extra class ${K+1}$:  
\begin{equation}
d(\mathbf{x}|\tau) =
\begin{cases}
\argmax_k h_k(\mathbf{x}), & \text{if}~\max_k h_k(\mathbf{x}) > \tau\\
{K+1},  & \text{otherwise}
\end{cases}.
\end{equation}

\subsection{Ensemble Construction}
\label{EnsembleCreation}
We define the expertise domain of the specialists (i.e.\ the subsets of classes) by separating each class from its most likely fooled classes. We later show in Section~\ref{analysis} how separation of each class from its high likely fooling classes can promote entropy in the ensemble, which in turns leads to predicting adversaries with low confidence (high uncertainty).

To separate the most fooling classes from each other, we opt to use the fooling matrix of FGS adversarial examples $\mathbf{C}\in\mathbb{R}^{K\times K}$. This matrix reveals that the clean samples from each true class have a high tendency to being fooled toward a limited number of classes not uniformly toward all of them (Fig.\ \ref{HistAdver}(a)). The selection of FGS adversaries is two-fold; their generation is computationally inexpensive, and they are highly transferable to many other classifiers, meaning that different classifiers (e.g.\ with different structures) behave in similar manner in their presence, i.e.\ fooled to the same classes~\cite{liu2016delving,szegedy2013intriguing,charles2018geometric}.

\begin{figure}[t!]
\centering
\subfigure[ CIFAR-10 FGS fooling matrix]{\includegraphics[width = 0.5\textwidth, trim=0.3cm 8.65cm 1cm 9cm, clip=true]{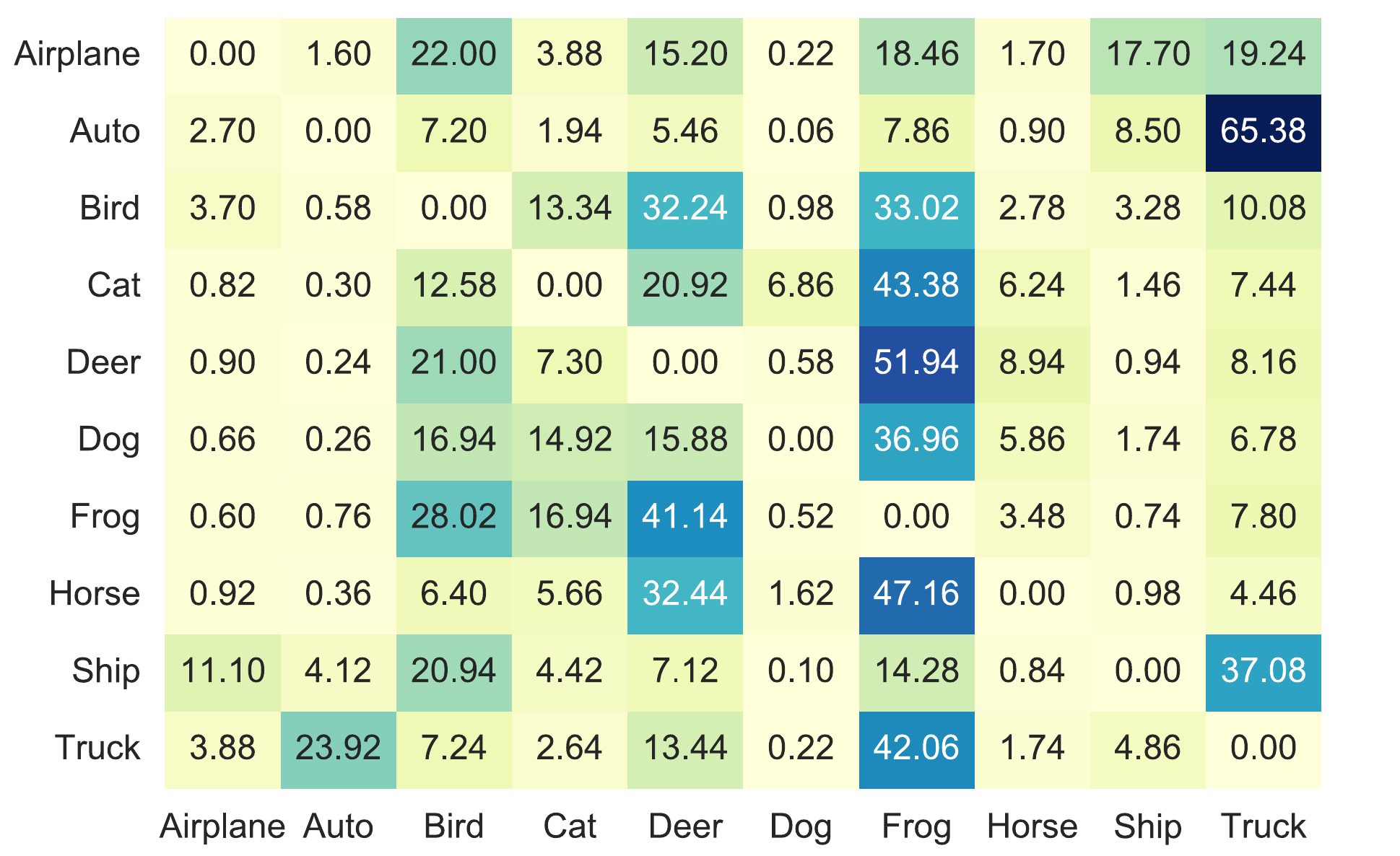}}~~\subfigure[The expertise domains of ``Airplane'' class ]{\includegraphics[width = 0.5\textwidth, trim=2cm 12cm 2cm 6cm, clip=true]{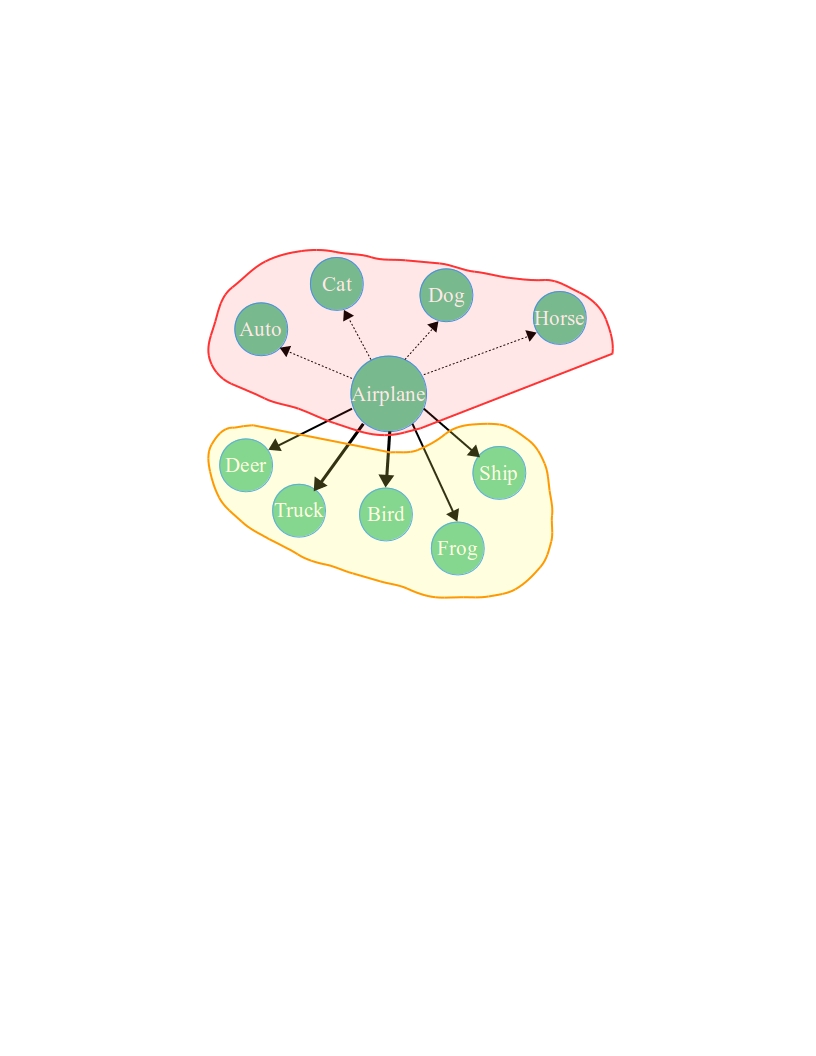}}

\label{HistAdver}
\vspace{-1em}
\caption{(a) Fooling matrix of FGS adversaries for CIFAR-10, which is computed from 5000 randomly selected FGS adversaries (500 per class). Each row shows the fooling rates (in percentage) from a true class to other classes (rows and columns are true and fooling classes, respectively). (b) An example of forming expertise domains for class ``Airplane'': its high likely fooled classes (in yellow zone) and less likely fooled classes (in red zone) are forming two expertise domains. }
\end{figure}


Using each row of the fooling matrix (i.e.\ $\mathbf{c}_i$), we define two expertise domains for $i$-th true class so as to split its high likely fooling classes from its less likely fooling classes as follows (Fig.~\ref{HistAdver}(b)):
\begin{itemize}
    \item Subset of high likely fooling classes of $i$: ~$\mathbb{U}_{i} =\cup \{j\}$~~\text{if}~~ $c_{ij}>\mu_{i}$, $j\in \{1,\dots,K\}$
    \item Subset of less likely fooling classes of $i$: $\mathbb{U}_{i+K}=\{1,\dots,K\}\setminus \mathbb{U}_i$,
\end{itemize}
where $\mu_{i}=\sum_{j=1}^{K}c_{ij}$ (average of fooling rates of $i$-th true class).
Repeating the above procedure for all $K$ classes makes $2K$ subsets (expertise domains) for a $K$ classification problem. Note that the duplicated expertise domains can be removed so as to avoid having multiple identical expertise domains (specialists). 

Afterwards, for each expertise domain, one specialist is trained in order to form an ensemble of specialist CNNs. A generalist (vanilla) CNN, which trained on the samples belonging to all classes, is also added to this ensemble. The ensemble involving $M \leq 2K+1$ members is represented by $\mathcal{H}=\{h^{1},\ldots,h^{M}\}$, where $h^{j}(\cdot)\in [0,1]^{K}$ is $j$-th individual CNN mapping a given input to conditional probability over its expert classes, i.e.\ the probability of the classes out of its expertise domain is fixed to zero.

\subsection{Voting Mechanism}
To compute the final prediction out of our ensemble for a given sample, we need to activate relevant specialists, then averaging their prediction along with that of the generalist CNN. Note that we cannot simply use the generalist CNN to activate specialists since in the presence of adversaries it can be fooled, then causing selection (activation) of the wrong specialists. In Algorithm~\ref{Votingmechanism}, we devise a simple and computationally efficient voting mechanism to activate those relevant specialists, then averaging their predictions.

Let us first introduce the following elements for each class $i$:
\begin{itemize}
    \item The actual number of votes for $i$-th class by the ensemble for a given sample $\mathbf{x}$:
$v_i(\mathbf{x}) = \sum_{j=1}^{M} \mathbb{I}\left( i = \argmax_{\{1,\dots K\}} h^j(\mathbf{x})\right)$, i.e.\ it shows the number of the members that classify $\mathbf{x}$ to $i$-th class.
    \item The maximum possible number of votes for $i$-th class is $\ceil{\frac{M}{2}} \leq K+1$. Recall that for each row, we split all $K$ classes into two expertise domains, where class $i$ is included in one of them. Considering all $K$ rows and the generalist, we end up having at maximum $K+1$ subsets that involve class $i$.

\end{itemize}

\begin{algorithm}[t]
\caption{Voting Mechanism}
\label{Votingmechanism}
\begin{algorithmic}[1]
\Require Ensemble $\mathcal{H}=\{h^{1},\ldots,h^{M}\}$, expertise domains $\mathcal{\mathbb{U}}=\{\mathbb{U}_1,\ldots,\mathbb{U}_M\}$, input $\mathbf{x}$
\Ensure Final prediction $\bar{h}(\mathbf{x})\in [0,1]^{K}$
\NoDo
\NoThen
\State {$v_k(\mathbf{x}) \gets \sum_{j=1}^{M} \mathbb{I}\left(k = \argmax_{i=1}^K h^j_i(\mathbf{x})\right),~k=1,\ldots,K \label{equation_alg}$}
\State {$k^* \gets \argmax_{k=1}^K v_k(\mathbf{x})$}

\If $v_{k^*}(\mathbf{x}) = \ceil{\frac{M}{2}}$
  \State $\mathcal{H}_{k^*} \gets \{h^i\in\mathcal{H}\,|\,k^*\in \mathbb{U}_i\}$
  \State $\bar{h}(\mathbf{x}) \gets \frac{1}{|\mathcal{H}_{k^*}|}\sum_{h^i\in\mathcal{H}_{k^*}} h^i(\mathbf{x})$
\Else
  \State $\bar{h}(\mathbf{x}) \gets \frac{1}{M}\sum_{h^i\in\mathcal{H}} h^i(\mathbf{x}) \label{othrws}$
\EndIf\\
\Return $\bar{h}(\mathbf{x})$
\end{algorithmic}

\end{algorithm}
%

As described in Algorithm~\ref{Votingmechanism}, for a given sample $\mathbf{x}$, if there is a class with its actual number of votes equal to its expected number of votes, i.e.\ $v_i(\mathbf{x})=\ceil{\frac{M}{2}}$, then it means all of the specialists, which are trained on ${i}$-th class, are simultaneously voting (classifying) for it. We call such a class \emph{a winner class}. Then, the specialists CNNs voting to the winner class are activated to compute the final prediction (lines 3--5 of Algorithm~\ref{Votingmechanism}), producing a certain prediction (with high confidence). Note that in the presence of clean samples, the relevant specialists in the ensemble are expected to do agree on the true classes since they, as strong classifiers, have high generalization performance on their expertise domains.

If no class obtains its maximum expected number of votes (i.e.\ $\nexists i,\ v_{i}(\mathbf{x}) = \ceil{\frac{M}{2}}$ ), it means that the input $\mathbf{x}$ leads the specialists to disagree on a winner class. In this situation, when no agreement exists in the ensemble, all the members should be activated to compute the final prediction (line~\ref{othrws} of Algorithm~\ref{Votingmechanism}). Averaging of the predictions by all the members leads to a final prediction with high entropy (i.e.\ low confidence). Indeed, a given sample that creates a disagreement (entropy) in the ensemble is either a hard-to-classify sample or an abnormal sample (e.g.\ adversarial examples).

Using the voting mechanism for this specialists ensemble, we can create a gap between the predictive confidences of clean samples (having high confidence) and those of adversaries (having low confidence). Finally, using a threshold $\tau$ on these predictive confidences, the unusual samples are identified and rejected. In the following, we argue that our voting mechanism enables us to set a global fixed threshold $\tau=0.5$ to perform identification of adversaries. This is unlike some threshold-based approaches~\cite{lee2018simple,bendale2016towards} that need to tune different thresholds for various datasets and their types of adversaries.

\begin{corollary} In a disagreement situation, the proposed voting mechanism makes the highest predictive confidence to be upper-bounded by $0.5+\epsilon'$ with $\epsilon'=\frac{1}{2M}$.
\label{coro1}
\end{corollary}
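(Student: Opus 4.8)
The plan is to stay entirely inside the disagreement branch of Algorithm~\ref{Votingmechanism}, so that the quantity to be bounded is the uniform average $\bar{h}(\mathbf{x}) = \frac{1}{M}\sum_{j=1}^{M} h^{j}(\mathbf{x})$, and to control $\max_{k}\bar{h}_{k}(\mathbf{x})$ one coordinate at a time. Fix an arbitrary class $k$ and recall that every member $h^{j}$ assigns probability exactly $0$ to classes outside its expertise domain; hence only the members in $\mathcal{H}_{k}=\{h^{i}\in\mathcal{H}\,|\,k\in\mathbb{U}_{i}\}$ can contribute a positive term to $\bar{h}_{k}(\mathbf{x})$, and
\begin{equation}
\bar{h}_{k}(\mathbf{x}) \;=\; \frac{1}{M}\sum_{h^{j}\in\mathcal{H}_{k}} h^{j}_{k}(\mathbf{x}).
\end{equation}
The decisive observation is that here the sum of (at most $|\mathcal{H}_{k}|$ nonzero) terms is divided by the full ensemble size $M$, whereas in the winner branch the corresponding sum would be divided only by $|\mathcal{H}_{k^{*}}|$; this mismatch is precisely what forces the confidence down toward $\tfrac12$ under disagreement.

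Next I would bound the right-hand side using the counting fact established just before the corollary, namely $|\mathcal{H}_{k}|\le\ceil{\frac{M}{2}}$ (the maximum possible number of votes for class $k$). Bounding each surviving term trivially by $h^{j}_{k}(\mathbf{x})\le 1$ already gives $\bar{h}_{k}(\mathbf{x})\le |\mathcal{H}_{k}|/M \le \ceil{\frac{M}{2}}/M$, and since $\ceil{\frac{M}{2}}\le\frac{M+1}{2}$ this is at most $\frac12+\frac{1}{2M}$, with equality to $\frac{M+1}{2M}$ in the generic odd case $M=2K+1$ (no duplicated specialists). As $k$ was arbitrary, the same bound transfers to $\max_{k}\bar{h}_{k}(\mathbf{x})$, yielding $\epsilon'=\frac{1}{2M}$. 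If one wishes to exhibit that the bound is in fact not tight, I would additionally split $\mathcal{H}_{k}$ into the $v_{k}(\mathbf{x})$ members for which $k$ is the $\argmax$ (each contributing $\le 1$) and the rest, for which some other class carries at least as much mass so that the two largest outputs sum to at most $1$ and therefore $h^{j}_{k}(\mathbf{x})\le\frac12$; combined with the disagreement hypothesis $v_{k}(\mathbf{x})\le\ceil{\frac{M}{2}}-1$ this sharpens the estimate to $\frac{v_{k}(\mathbf{x})+|\mathcal{H}_{k}|}{2M}\le\frac12$.

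I do not expect a serious obstacle in the core inequality, since it rests only on the ``out-of-domain contributes zero'' structure together with the vote cap $\ceil{\frac{M}{2}}$ that is already in hand. The two points requiring care are conceptual rather than computational: first, being explicit that the halving effect comes from the denominator $M$ (and not from any per-member bound), which is the real content distinguishing the disagreement branch from the winner branch; and second, the parity/ceiling bookkeeping, since the clean constant $\frac12+\frac{1}{2M}$ follows from $\ceil{\frac{M}{2}}=\frac{M+1}{2}$ for odd $M$, whereas for even $M$ the same argument produces the strictly smaller value $\frac12$. I would therefore state the result for the generic $M=2K+1$ and remark that even $M$ only strengthens the conclusion.
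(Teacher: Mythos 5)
Your proof is correct, but it takes a genuinely different route from the paper's. The paper works with $m=\max_k v_k(\mathbf{x})$, the \emph{actual} number of voters for the most-voted class: it argues that the members outside $\mathcal{H}_k$ contribute only a negligible $\frac{\epsilon}{M}$ to $\bar{h}_k(\mathbf{x})$ (invoking the heuristic that each CNN is uncalibrated and puts near-zero mass on classes it does not vote for), bounds the remaining sum by $m$, and then uses the disagreement hypothesis in the form $m<\ceil{\frac{M}{2}}$ to get the strict bound $\frac{m}{M}<\frac12+\frac{1}{2M}$. You instead use the \emph{structural} cap $|\mathcal{H}_k|\le\ceil{\frac{M}{2}}$ on how many members can even have $k$ in their expertise domain, together with the exact fact that out-of-domain classes receive probability zero; the disagreement hypothesis enters only to place you in the branch that divides by $M$ rather than by $|\mathcal{H}_{k^*}|$. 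This buys you rigor where the paper is approximate — you never need to discard an $\frac{\epsilon}{M}$ term — and your refinement (splitting $\mathcal{H}_k$ into voters, each contributing at most $1$, and non-voters, each contributing at most $\frac12$, combined with $v_k\le\ceil{\frac{M}{2}}-1$) actually sharpens the constant to $\frac12$ for odd $M$, which is stronger than the stated corollary. Two caveats: your identity $\bar{h}_k(\mathbf{x})=\frac1M\sum_{h^j\in\mathcal{H}_k}h^j_k(\mathbf{x})$ and the cap $|\mathcal{H}_k|\le\ceil{\frac{M}{2}}$ both lean on the generic configuration $M=2K+1$ with no duplicated expertise domains removed (after deduplication the cap can fail for some classes, a wrinkle the paper itself glosses over, whereas the paper's $m<\ceil{\frac{M}{2}}$ holds by definition of the disagreement branch); and your bound is non-strict where the paper's is strict, which is immaterial to the statement as written.
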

\begin{proof}
Consider a disagreement situation in the ensemble for a given $\mathbf{x}$, where all the members are averaged to create $\bar{h}(\mathbf{x})=\frac{1}{M}\sum_{h^{j}\in \mathcal{H}} h^{j}(\mathbf{x})$. The highest predictive confidence of $\bar{h}(\mathbf{x})$ belongs to the class that has the largest number of votes, i.e.\ $m = \max[v_{1}(\mathbf{x}),\ldots,v_{K}(\mathbf{x})]$. Let us represent these $m$ members that are voting to this class ($k$-th class) as $\mathcal{H}_k=\{h^j\in\mathcal{H}\,|\,k\in \mathbb{U}_j\}$. Since each individual CNNs in the ensemble are basically uncalibrated learners (having very high confident prediction for a class and near to zero for the remaining classes), the confidence probability of $k$-th class of those excluded members from $\mathcal{H}_k$ (those that do not vote for $k$-th class) can be negligible. Thus, their prediction can be simplified as $\bar{h}_{k}(\mathbf{x}) =\frac{1}{M}\sum_{h^{j}\in \mathcal{H}_k} h^{j}_{k}(\mathbf{x})+\frac{\epsilon}{M} \approx \frac{1}{M}\sum_{h^{j}\in \mathcal{H}_k} h^{j}_{k}(\mathbf{x})$ (the small term $\frac{\epsilon}{M}$ is discarded). Then, from the following inequality $\sum_{h^{j}\in \mathcal{H}_k} h^{j}_{k}(\mathbf{x})\leq m$, we have $\frac{1}{M}\sum_{h^{j}\in \mathcal{H}_k} h^{j}_{k}(\mathbf{x})\leq \frac{m}{M}$ (I). 

On the other hand, due to having no winner class, we know that $m < \ceil{\frac{M}{2}}$ (or $m < \frac{M}{2}+\frac{1}{2}$), such that by multiplying it by $\frac{1}{M}$ we obtain $\frac{m}{M} < \frac{1}{2}+\frac{1}{2M}$ (II). 

Finally considering (I) and (II) together, it derives $\frac{1}{M}\sum_{h^{j}\in \mathcal{H}_k} h^{j}_{k}(\mathbf{x})< 0.5+ \frac{1}{2M}$. For the ensemble with a large size, e.g.\ likewise our ensemble, the term $ \epsilon'=\frac{1}{2M}$ is small. Therefore, it shows the class with the maximum probability (having the maximum votes) can be upper-bounded by $0.5+\epsilon'$.
\qedsymbol
\end{proof}


\vspace{-1em}\section{Analysis of Specialists ensemble}
\label{analysis}
Here, we first explain how adversarial examples give rise to entropy in our ensemble, leading to their low predictive confidence (with maximum confidence of $0.5+\epsilon'$). As well, we examine the role of diversity in our ensemble, which harden the generation of white-box adversaries.



In a \textbf{black-box} attack, we assume that the attacker is not aware of our ensemble of specialists, thus generates some adversaries from a pre-trained vanilla CNN $g(\cdot)$ to mislead our underlying ensemble. Taking a pair of an input sample with its true label, i.e.\ $(\mathbf{x}, k)$, an adversary $\mathbf{x}' = \mathbf{x} + \delta$ fools the model $g$ such that $k =\argmax g(\mathbf{x})$ while $k' =\argmax g(\mathbf{x}')$ with $k'\neq k$, where $k'$ is one of those most-likely fooling classes for class $k$ (i.e.\ $k'\in\mathbb{U}_k$). Among the specialists that are expert on $k$, at least one of them does not have $k'$ in their expertise domains since we intentionally separated $k$-th class from its most-likely fooling classes when defining its expertise domains (Section~\ref{EnsembleCreation}). Formally speaking, denote those expertise domains comprising class $k$ as follows $\mathcal{U}^{k} = \{\mathbb{U}_{j}\,|\,k\in \mathbb{U}_{j}\}
$ where (I) $\mathbb{U}_{j}\neq \mathbb{U}_{i}~\forall \mathbb{U}_{i},~\mathbb{U}_{j}\in \mathcal{U}^{k}$ and (II) $k'\notin \cap~\mathcal{U}^{k}$. Therefore, regarding the fact that (I) the expertise domains comprising $k$ are different and (II) their shared classes do not contain $k'$, it is not possible that all of their corresponding specialists models are fooled simultaneously toward $k'$. In fact, these specialists may vote (classify) differently, leading to a disagreement on the fooling class $k'$. So, due to this disagreement in the ensemble with no winner class, all the ensemble's members are activated, resulting in prediction with high uncertainty (low confidence) according to corollary~\ref{coro1}. Generally speaking, if $\{\cap~\mathcal{U}^{k}\}\setminus{k}$ is a small or an empty set, harmoniously fooling the specialist models, which are expert on $k$, is harder.

In a \textbf{white-box attack}, an attacker attempts to generate adversaries to \emph{confidently} fool the ensemble, meaning the adversaries should simultaneously activate \emph{all} of the specialists that comprise the fooling class in their expertise domain. Otherwise, if at least one of these specialists is not fooled, then our voting mechanism results in adversaries with low confidence, which can then be automatically rejected using the threshold ($\tau=0.5$). In the rest we bring some justifications on the hardness of generating high confidence gradient-based attacks from the specialists ensemble.

Instead of dealing with the gradient of one network, i.e.\ $\frac{\partial {h}(\mathbf{x})}{\partial \mathbf{x}}$, the attacker should deal with the gradient of the ensemble, i.e.\ $\frac{\partial \bar{h}(\mathbf{x})}{\partial \mathbf{x}}$, where $\bar{h}(\mathbf{x})$ computed by line 5 or line 7 of Algorithm.~\ref{Votingmechanism}. Formally, to generate a gradient-based adversary from the ensemble for a given labeled clean input sample $(\mathbf{x}, \mathbf{y}=k)$, the derivative of the ensemble's loss, i.e.\ $\mathcal{L}(\bar{h}(\mathbf{x}), \mathbf{y}) = -\log \bar{h}_{k}(\mathbf{x})$, w.r.t.\ $\mathbf{x}$ is as follows:
\begin{equation}
\frac{\partial \mathcal{L} (\bar{h}(\mathbf{x}), \mathbf{y})}{\partial \mathbf{x}}
 = \frac{\partial \mathcal{L} }{\partial \bar{h}_{k}(\mathbf{x})} \frac{\partial \bar{h}_{k}(\mathbf{x})}{\partial \mathbf{x}} =  \underbrace{-\frac{1}{\bar{h}_{k}(\mathbf{x})}}_{\beta} \frac{\partial \bar{h}_{k}(\mathbf{x})}{\partial \mathbf{x}} = \beta \frac{1}{|\mathcal{H}_k|}\sum_{h^{i}\in \mathcal{H}_k}\frac{\partial h^{i}_{k}(\mathbf{x})}{\partial \mathbf{x}}.
\label{FGS-pertb}
\end{equation}
Initially $\mathcal{H}_k$ indicates the set of activated specialists voting for class $k$ (true label) plus the generalist for the given input $\mathbf{x}$.
Since the expertise domains of the activated specialists are different ($\mathcal{U}^{k} = \{\mathbb{U}_{j}\,|\,k\in \mathbb{U}_{j}\}$), most likely their derivative are diverse, i.e.\ fooling toward different classes, which in turn creates perturbations in various fooling directions (Fig~\ref{schmatic}). Adding such diverse perturbation to a clean sample may promote disagreement in the ensemble, where no winner class can be agreed upon. In this situation, when all of the members are activated, the generated adversarial sample is predicted with a low confidence, thus can be identified. For the iterative attack algorithms, e.g.\ I-FGS, the process of generating adversaries may continue using the derivative of all of the members, adding even more diverse perturbations, which in turn makes reaching to an agreement in the ensemble on a winner fooling class even more difficult.

\vspace{-1em}\section{Experimentation}\label{sec:evaluation}

\textbf{Evaluation Setting}:
Using MNIST and CIFAR-10, we investigate the performance of our method for reducing the risk rate of black-box attacks (Eq.~\ref{risk_adv_eq}) due to of their detection, and reducing the success rate of creating white-box adversaries. Two distinct CNN configurations are considered in our experimentation:  for \emph{MNIST}, a basic CNN with three convolution layers of respectively 32, 32, and 64  filters of $5\times 5$, and a final fully connected (FC) layer with 10 output neurons. Each of these convolution layers is followed by a ReLU and $3\times3$ pooling filter with stride 2. For \emph{CIFAR-10}, a VGG-style CNN (details in~\cite{simonyan2014very}) is used. For both CNNs, we use SGD with a Nesterov momentum of $0.9$, L2 regularization with its hyper-parameter set to $10^{-4}$, and dropout ($p=0.5$) for the FC layers. For the evaluation purposes, we compare our ensemble of specialists with a vanilla (naive) CNN, and a pure ensemble, which involves 5 vanilla CNNs being different by random initialization of their parameters.

\textbf{Evaluation Metrics}:
To evaluate a predictor $h(\cdot)$ that includes a rejection option, we report a risk rate $E_D|\tau$ on a clean test set $\mathcal{D}=\{(\mathbf{x}_i,\mathbf{y}_i)\}_{i=1}^N$ at a given threshold $\tau$, which computes the ratio of the (clean) samples that are \emph{correctly classified but rejected} due to their confidence less than $\tau$ and those that are \emph{misclassified but not rejected} due to a confidence value above $\tau$:
\begin{equation}
\begin{split}
E_{D}|\tau = \frac{1}{N}\sum_{i=1}^{N} \bigg( & \left(\mathbb{I}[d(\mathbf{x}_{i}|\tau)\neq K+1]~\times~\mathbb{I}[\argmax h(\mathbf{x}_i)\neq\mathbf{y}_i]\right) \\ & +~\left(
 \mathbb{I}[d(\mathbf{x}_{i}|\tau)=K+1]~\times~\mathbb{I}[\argmax h(\mathbf{x}_i)=\mathbf{y}_i]\right)\bigg).
 \end{split}
\end{equation}
In addition,  we report the risk rate $E_{A}|\tau$ on each adversaries set, i.e.\  $\mathcal{A}=\{(\mathbf{x}'_i,\mathbf{y}_i)\}_{i=1}^{N'}$ including pairs of an adversarial example $\mathbf{x}'_i$ associated by its true label, to show the percentage of misclassified adversaries that are not rejected due to their confidence value above $\tau$:
\begin{equation}
E_{A}|\tau= \frac{1}{N'}\sum_{i=1}^{N'} \left(\mathbb{I}[ d(\mathbf{x}'_{i}|\tau)\neq {K+1}]~\times~\mathbb{I}[\argmax h(\mathbf{x}'_i)\neq \mathbf{y}_i]\right).
\label{risk_adv_eq}
\end{equation}

\subsection{Empirical Results}

\textbf{Black-box attacks}:
To assess our method on different types of adversaries, we use various attack algorithms, namely FGS~\cite{goodfellow2014explaining}, TFGS~\cite{kurakin2016adversarial}, DeepFool (DF)~\cite{moosavi2015deepfool}, and CW~\cite{carlini2017towards}. To generate the black-box adversaries, we use another vanilla CNN, which is different from all its counterparts involved in the pure ensemble-- by using different random initialization of its parameters. For FGS and T-FGS algorithms we generate $2000$ adversaries with $\epsilon=0.2$ and $\epsilon=0.03$, respectively, for randomly selected clean test samples from MNIST and CIFAR-10. For CW attack, due to the high computational burden required, we generated 200 adversaries with $\kappa=40$, where larger $\kappa$ ensures generation of high confidence and highly transferable CW adversaries.

Fig.~\ref{error_threshold} presents risk rates ($E_D|\tau$) of different methods on clean test samples of MNIST (first row) and those of CIFAR-10 (second row), as well as their corresponding adversaries $E_A|\tau$, as functions of threshold ($\tau$). As it can be seen from Fig.~\ref{error_threshold}, by increasing the threshold, more adversaries can be detected (decreasing $E_A$) at the cost of increasing $E_D$, meaning an increase in the rejection of the clean samples that are correctly classified.  
\begin{figure}[ht!]
    \centering
    \subfigure[MNIST test data]{\includegraphics[width = 0.3\textwidth, trim=0cm 7cm 1cm 8cm, clip=true]{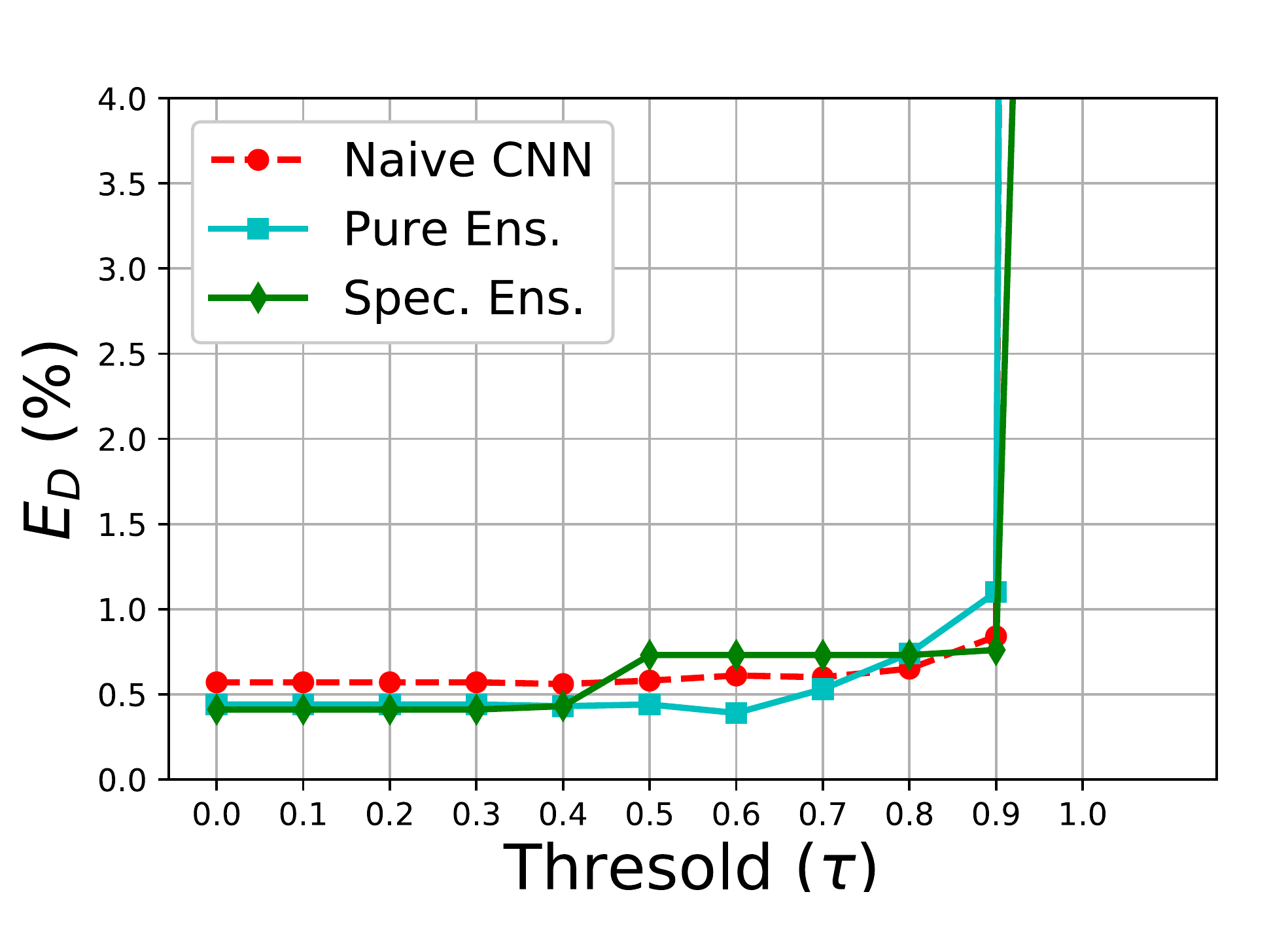}}~
    \subfigure[MNIST FGS]{\includegraphics[width = 0.3\textwidth, trim=0cm 7cm 1cm 8cm, clip=true]{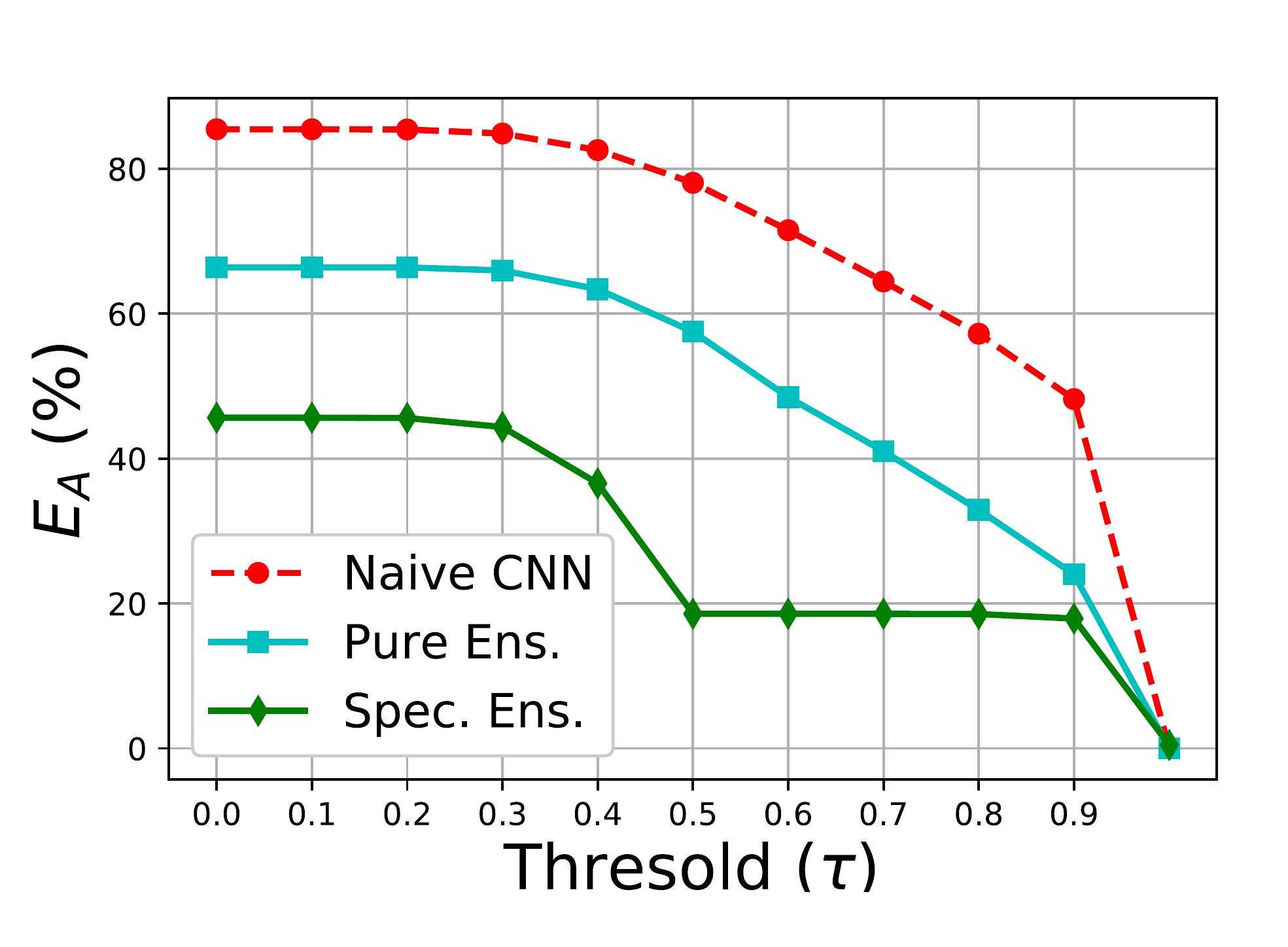}}~
    \subfigure[MNIST TFGS]{\includegraphics[width = 0.3\textwidth, trim=0cm 7cm 1cm 8cm, clip=true]{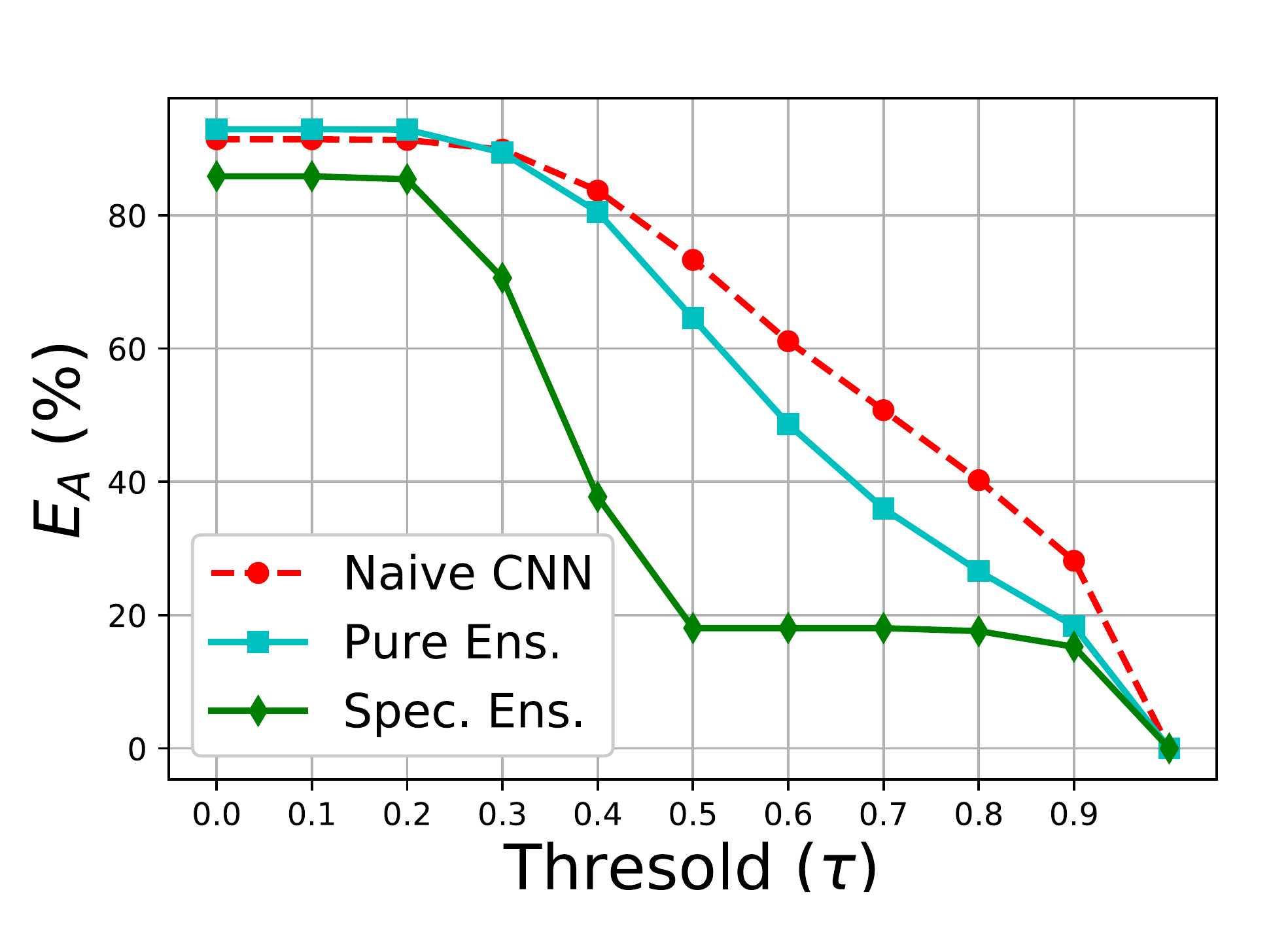}}\\
    
    \subfigure[CIFAR-10 test data]{\includegraphics[width = 0.3\textwidth, trim=0cm 7cm 1cm 8cm, clip=true]{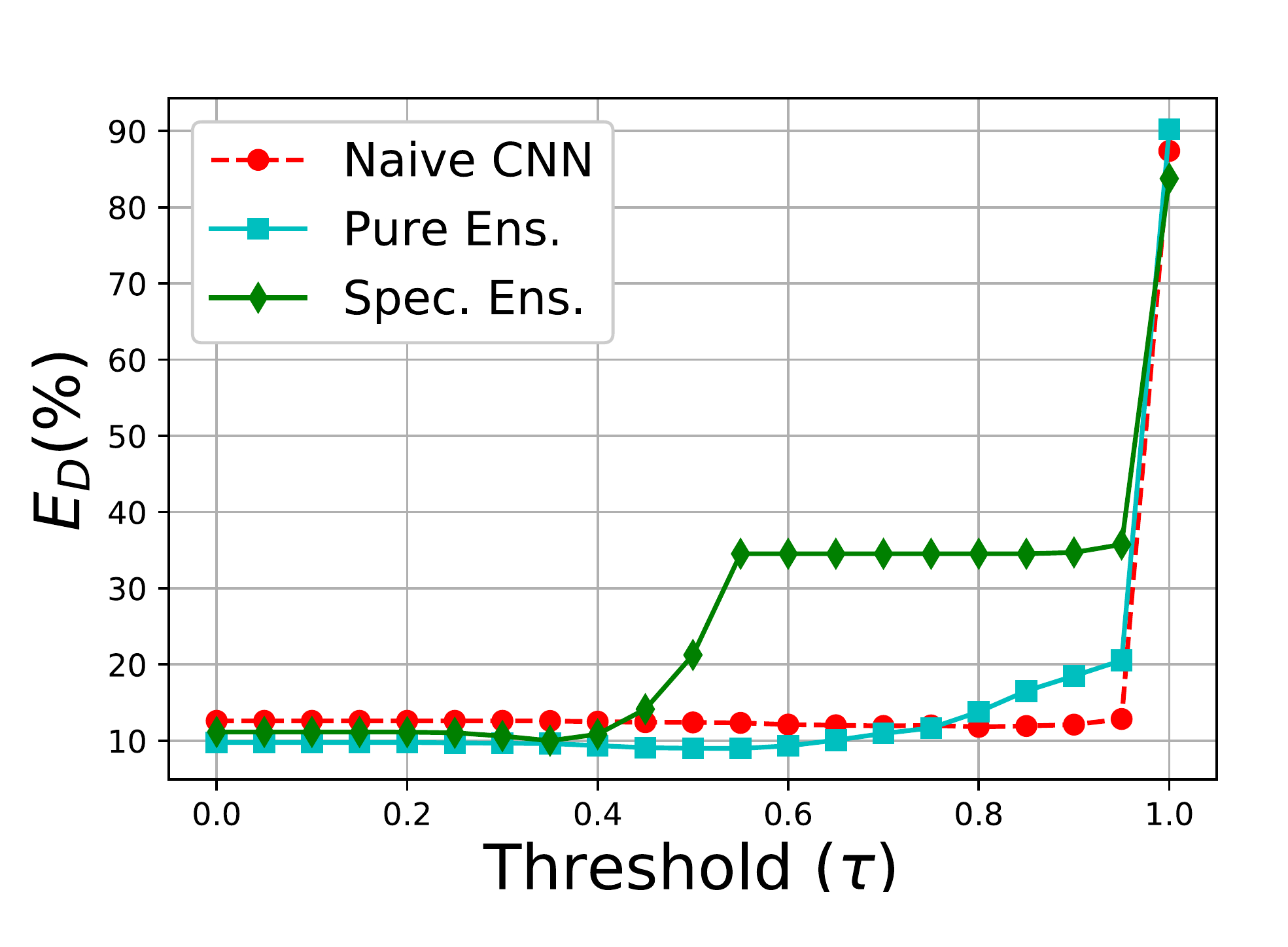}}
\subfigure[CIFAR-10 FGS]{\includegraphics[width = 0.3\textwidth,trim=0cm 7cm 1cm 8cm, clip=true]{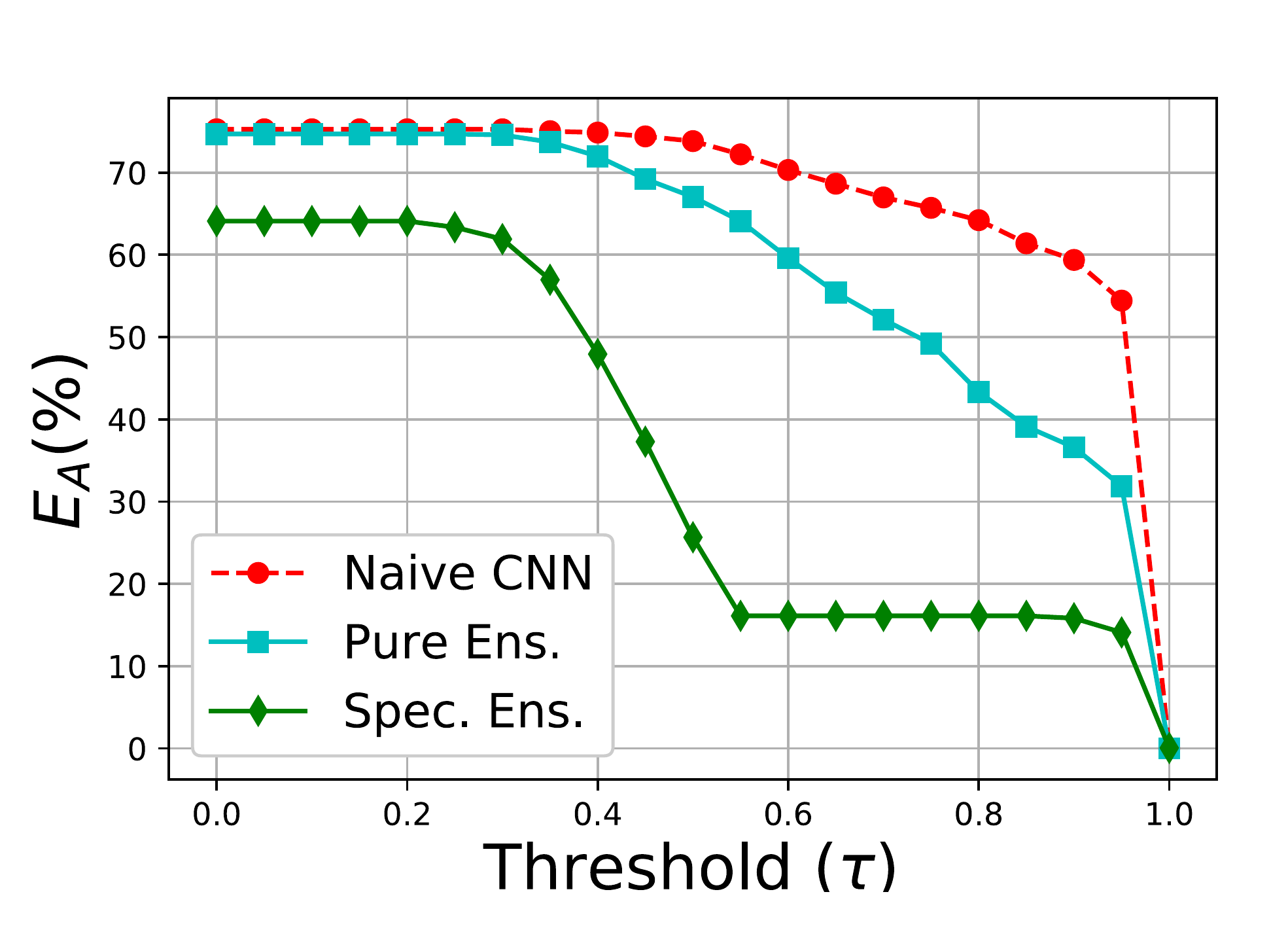}}~
\subfigure[CIFAR-10 TFGS]{\includegraphics[width= 0.3\textwidth,trim=0cm 7cm 1cm 8cm, clip=true]{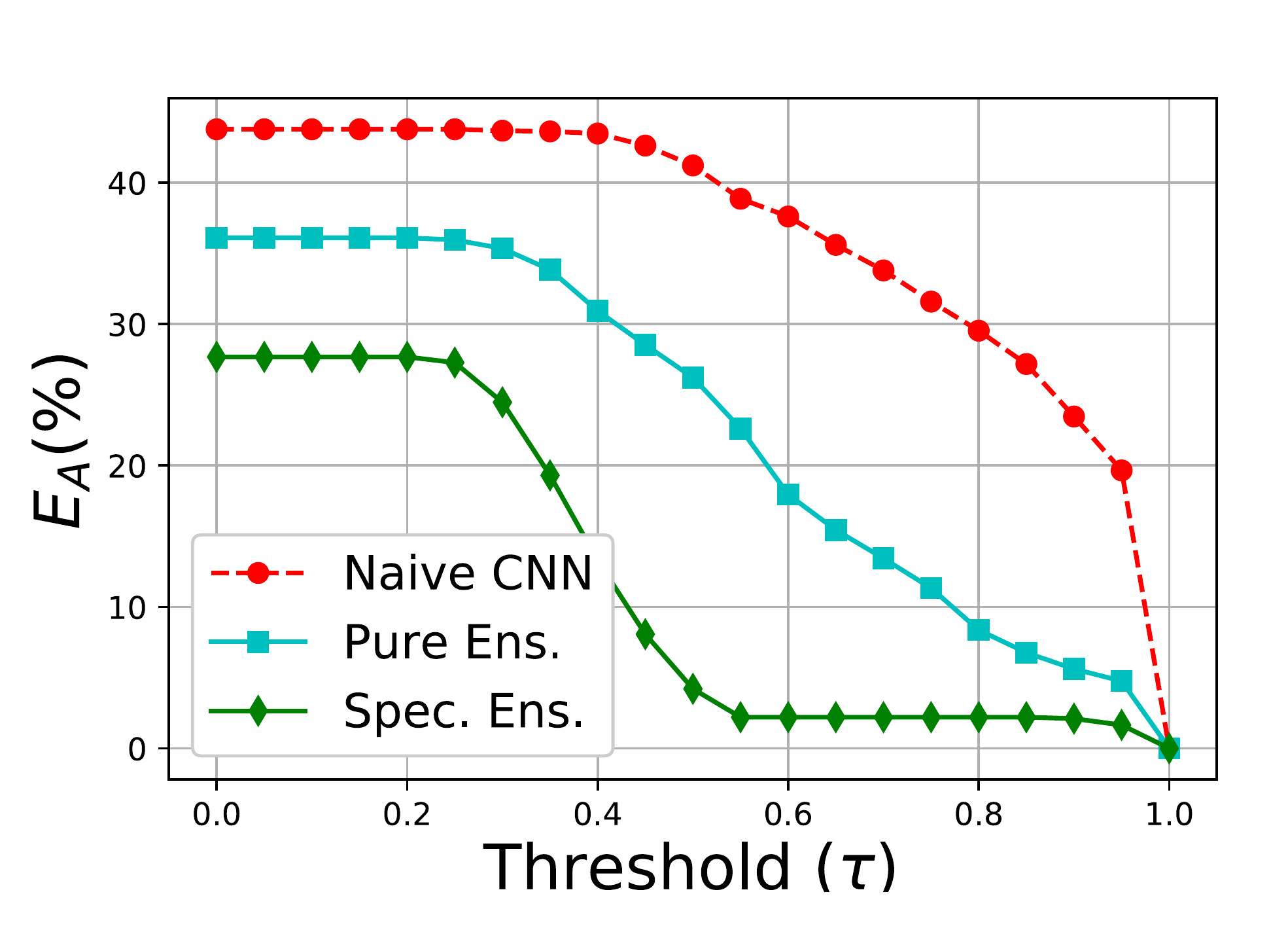}}
\vspace{-1em}
    \caption{The risk rates on the clean test samples and their black-box adversaries as the function of threshold ($\tau$) on the predictive confidence.}
    \label{error_threshold}
\end{figure}

To appropriately compare the methods, we find an optimum threshold that creates small $E_D$ and $E_A$ collectively, i.e.\ $\argmin_{\tau} E_D|\tau+E_A|\tau$. Recall that, as corollary~\ref{coro1} states, in our ensemble of specialists, we can fix the threshold of our ensemble to $\tau^*=0.5$.
In Table~\ref{error_at_optTau}, we compare the risk rates of our ensemble with those of pure ensemble and vanilla CNN at their corresponding optimum thresholds. For MNIST, our ensemble outperforms naive CNN and pure ensemble as it detects a larger portion of MNIST adversaries while its risk rate on the clean samples is only marginally increased. 
Similarly, for CIFAR-10, our approach can detect a significant portion of adversaries at $\tau^*=0.5$, reducing the risk rates on adversaries. However, at this threshold, our approach has higher risk rate on the clean samples than that of two other methods.
\begin{table}[t!]
    \centering
    \begin{tabular}{l|l|cccc}
    \toprule
Task &\multirow{2}{*}{\backslashbox{Methods}{Adversaries}} &   FGS & TFGS & CW & DeepFool \\
&&$E_A$~/~$E_D$&$E_A$~/~$E_D$&$E_A$~/~$E_D$&$E_A$~/~$E_D$\\
\midrule
\multirow{3}{*}{ MNIST }
&  Naive CNN   & 48.21~/~0.84 & 28.15~/~0.84& 41.5~/~0.84 & 88.68~/~0.84 \\
&  Pure Ensemble &  24.02~/~1.1 & 18.35~/~1.1&28.5~/~1.1 & 72.73~/~1.1  \\
& Specialists Ensemble  &  \underline{\textbf{18.58}~/~\textbf{0.73}} & \underline{\textbf{18.05}~/~\textbf{0.73}}& \underline{\textbf{24}~/~\textbf{0.73}} & \underline{\textbf{54.24}~/~\textbf{0.73}} \\

\midrule
\multirow{3}{*}{ CIFAR-10 } 
&  Naive CNN  & 59.37~/~\textbf{12.11} & 23.47~/~\textbf{12.11}& 51.5~/~\textbf{12.11} & 28.81~/~12.11 \\

& Pure Ensemble & 36.59~/~18.5 & \underline{8.37~/~13.79}& \underline{4.0~/~13.79} &\underline{ 7.7~/~18.5}\\
& Specialists Ensemble & \underline{\textbf{25.66}~/~21.25} & ~\textbf{4.21}~/~21.25 & ~\textbf{3.5}~/~21.25 & 6.02~/~21.25
\\
\bottomrule
\end{tabular}
\vspace{1em}
\caption{The risk rate of the clean test set ($E_D|\tau*$) along with that of black-box adversarial examples sets ($E_A|\tau*$) are shown in percentage at the optimum threshold of each method. The methods with the lowest collective risk rate (i.e. $E_A+E_D$) is underlined, while the best results for the two types of risk considered independently are in bold.}
    \label{error_at_optTau}
    \vspace{-2em}
\end{table}
 
\textbf{White-box attacks}:
In the white-box setting, we assume that the attacker has full access to a victim model. Using each method (i.e.\ naive CNN, pure ensemble, and specialists ensemble) as a victim model, we generate different sets of adversaries (i.e.\ FGS, Iterative FGS (I-FGS), and T-FGS). A successful adversarial attack $\mathbf{x}'$ is achieved once the underlying model misclassifies it with a confidence higher than its optimum threshold $\tau^*$. When the confidence for an adversarial example is lower than $\tau*$, it can be easily detected (rejected), thus it is not considered as a successful attack. 

We evaluate the methods by their \emph{white-box attacks success rates}, indicating the number of successful adversaries that satisfies the aforementioned conditions (i.e. a misclassification with a confidence higher than $\tau^*$) during $t$ iterations of the attack algorithm. Table~\ref{white-box-successrate} exhibits the success rates of white-box adversaries (along with their used hyper-parameters) generated by naive CNN ($\tau^*=0.9$), pure ensemble ($\tau^*=0.9$), and specialists ensemble ($\tau^*=0.5$). For the benchmark datasets, the number of iterations of FGS and T-FGS is 2 while that of iterative FGS is 10. As it can be seen in Table~\ref{white-box-successrate}, the success rates of adversarial attacks using ensemble-based methods are smaller than those of naive CNN since diversity in these ensembles hinders generation of adversaries with high confidence.
\begin{table}[t!]
    \centering
\begin{tabular}{l|>{\centering\arraybackslash}p{4em}>{\centering\arraybackslash}p{4em}>{\centering\arraybackslash}p{4em}|>{\centering\arraybackslash}p{4em}>{\centering\arraybackslash}p{4em}>{\centering\arraybackslash}p{4em}}
\toprule
\multicolumn{1}{c|}{}&\multicolumn{3}{c|}{MNIST}&  \multicolumn{3}{c}{CIFAR-10}\\
\hline
\multirow{2}{*}{\backslashbox{Methods}{Adversaries}} & FGS  & T-FGS & I-FGS&  FGS & T-FGS & I-FGS \\
 &$\scriptstyle\epsilon=0.2$ & $\scriptstyle\epsilon=0.2$ &$\scriptstyle\epsilon=2\times10^{-2}$&$\scriptstyle\epsilon=3\times10^{-2}$&$\scriptstyle\epsilon=3\times10^{-2}$&$\scriptstyle\epsilon=3\times10^{-3}$\\
\hline
 Naive CNN& 89.94 & 66.16 & 66.84& 86.16 & 81.38 & 93.93 \\
Pure Ensemble & 71.58 & 50.64 & 48.62& 42.65 & 13.96 & 45.78  \\
 Specialists Ensemble & \textbf{45.15} & \textbf{27.43} & \textbf{13.63}&  \textbf{34.1} & \textbf{7.43} & \textbf{34.20} \\
 \hline
\end{tabular} 

    \caption{Success rate of white-box adversarial examples (lower is better) generated by naive CNN, pure ensemble (5 generalists), and specialists ensemble at their corresponding optimum threshold. An successful white-box adversarial attack should fool the underlying model with a confidence higher than its optimum $\tau^*$.  }
    \label{white-box-successrate}
    \vspace{-2em}
\end{table}

\begin{wrapfigure}{r}{0.4\textwidth}
\begin{minipage}[t]{\linewidth}
\vspace{-2em}
\centering
\includegraphics[width=1\textwidth]{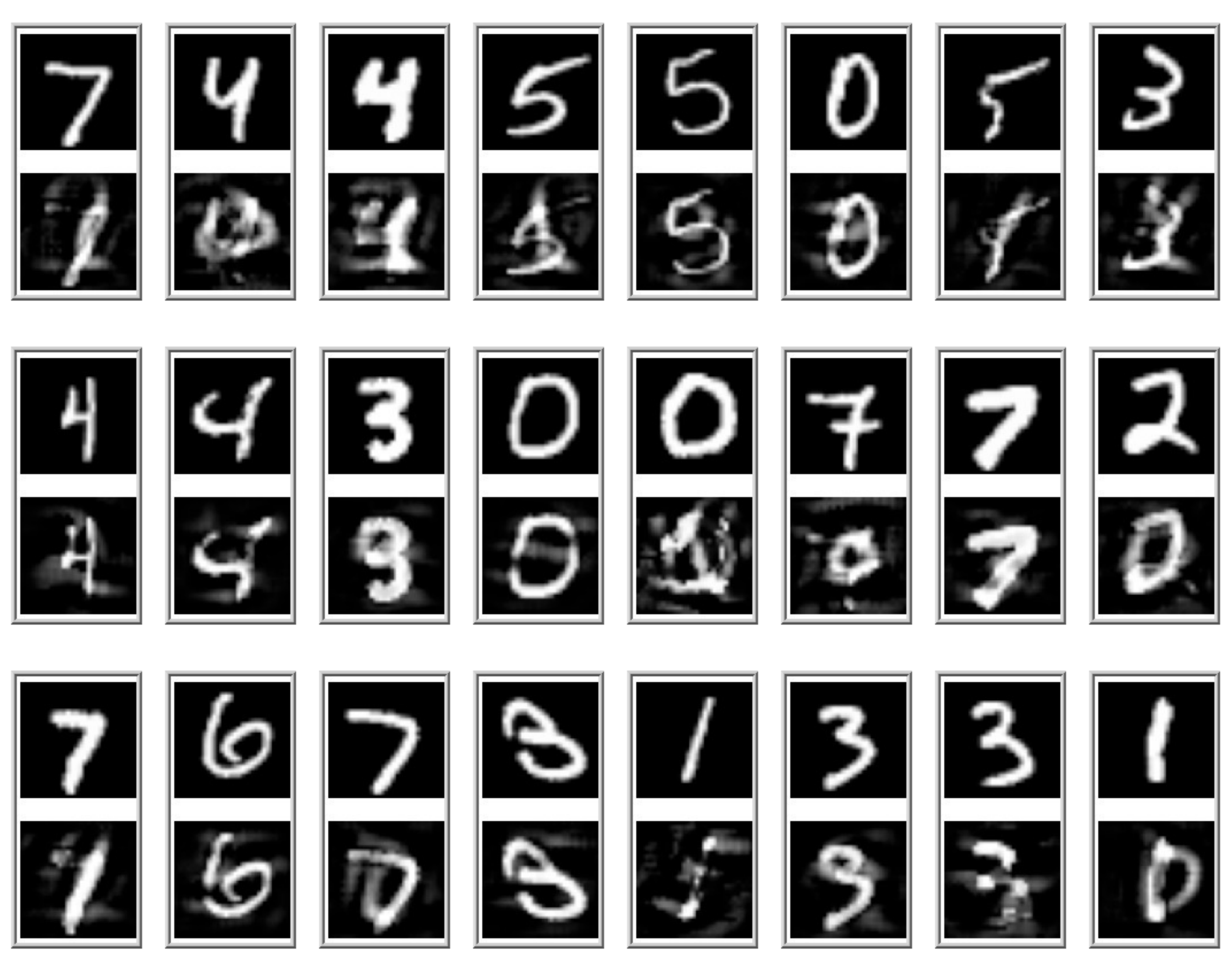}
\label{ensemble-CW}
\vspace{-2em}
\caption{Gray-box CW adversaries that confidently fool our specialists ensemble. According to the definition of adversarial example,however, some of them are not actually adversaries due to the significant visual perturbations.}
 \vspace{-2em}
\end{minipage}
\end{wrapfigure}

\textbf{Gray-box CW attack}: In the gray-box setting, it is often assumed that the attacker is aware of the underlying defense mechanism (e.g. specialists ensemble in our case) but has no access to its parameters and hyper-parameters. Following~\cite{he2017adversarial}, we evaluate our ensemble on CW adversaries generated by another specialists ensemble, composed of 20 specialists and 1 generalist for 100 randomly selected MNIST samples. Evaluation of our specialists ensemble on these targeted gray-box adversaries (called "gray-box CW") reveals that our ensemble provides low confidence predictions (i.e. lower than $0.5$) for $74\%$ of them (thus able to reject them) while $26\%$ have confidence more than $0.5$ (i.e. non-rejected adversaries). Looking closely at those non-rejected adversaries in Fig.~\ref{ensemble-CW}, it can be observed that some of them can even mislead a human observer due to adding very visible perturbation, where the appearance of digits are significantly distorted.

\vspace{-1em}\section{Related Works\label{sec:relatedworks}}

To address the issue of robustness of deep neural networks, one can either \emph{enhance classification accuracy of neural networks to adversaries}, or devise \emph{detectors to identify adversaries} in order to reject to process them. The former class of approaches, known as adversarial training, usually train a model on the training set, which is augmented by adversarial examples. The main difference between many adversarial training approaches lies in the way that the adversaries are created. For example, some \cite{kurakin2016adversarial,huang2015learning,goodfellow2014explaining,madry2017towards} have trained the models with adversaries generated on-the-fly, while others conduct adversarial training with a pre-generated set of adversaries, either produced from an ensemble~\cite{tramer2017ensemble} or from a single model~\cite{rozsa2016adversarial,moosavi2015deepfool}. With the aim detecting adversaries to avoid making wrong decisions over the hostile samples, the second category of approaches propose the detectors, which are usually trained by a training set of adversaries~\cite{metzen2017detecting,feinman2017detecting,grosse2017statistical,Lu_2017_ICCV,meng2017magnet,lee2018simple}.

Notwithstanding the achievement of some favorable results by both categories of approaches, the main concern is that their performances on all types of adversaries are extremely dependent on the capacity of generating an exhaustive set of adversaries, which comprises different types of adversaries. While making such a complete set of adversaries can be computationally expensive, it has been shown that adversely training a model on a specific type of adversaries does not necessarily confer a CNN robustness to other types of adversaries~\cite{tramer2019adversarial,zhang2019limitation}. 

Some ensemble-based approaches~\cite{strauss2017ensemble,kariyappa2019improving} were shown to be effective for mitigating the risk of adversarial examples. Strauss et al.~\cite{strauss2017ensemble} demonstrated some ensembles of CNNs that are created by bagging and different random initializations are less fooled (misclassify adversaries), compared to a single model. Recently, Kariyappa et al.~\cite{kariyappa2019improving} have proposed an ensemble of CNNs, where they explicitly force each pair of CNNs to have dissimilar fooling directions, in order to promoting diversity in the presence of adversaries. However, computing similarity between the fooling directions by each pair of members for every given training sample is computationally expensive, results in increasing training time.

\section{Conclusion}

In this paper, we propose an ensemble of specialists, where each of the specialist classifiers is trained on a different subset of classes. We also devise a simple voting mechanism to efficiently merge the predictions of the ensemble's classifiers. Given the assumption that CNNs are strong classifiers and by leveraging diversity in this ensemble, a gap between predictive confidences of clean samples and those of black-box adversaries is created. Then, using a global fixed threshold, the adversaries predicted with low confidence are rejected (detected). We empirically demonstrate that our ensemble of specialists approach can detect a large portion of black-box adversaries as well as makes the generation of white-box attacks harder. This illustrates the beneficial role of diversity for the creation of ensembles in order to reduce the vulnerability to black-box and white-box adversarial examples.\\[-2.5em]

\subsubsection*{Acknowledgements}
This work was funded by NSERC-Canada, Mitacs, and Prompt-Qu\'ebec. We  thank Annette Schwerdtfeger for proofreading the paper.

\bibliographystyle{splncs}
{\bibliography{ref}}

\end{document}